\documentclass{article}

% if you need to pass options to natbib, use, e.g.:
%     \PassOptionsToPackage{numbers, compress}{natbib}
% before loading neurips_2023

% ready for submission
% \usepackage{neurips_mod}
%\usepackage[final]{neurips_2023}

% to compile a preprint version, e.g., for submission to arXiv, add add the
% [preprint] option:
\usepackage[preprint]{neurips_2023}

% to compile a camera-ready version, add the [final] option, e.g.:
%     \usepackage[final]{neurips_2023}

% to avoid loading the natbib package, add option nonatbib:
%    \usepackage[nonatbib]{neurips_2023}

\usepackage[utf8]{inputenc} % allow utf-8 input
\usepackage[T1]{fontenc}    % use 8-bit T1 fonts
\usepackage{hyperref}       % hyperlinks
\hypersetup{
    colorlinks,
    linkcolor={red!80!black},
    citecolor={blue!80!black},
    urlcolor={blue!80!black}
}
\usepackage{url}            % simple URL typesetting
\usepackage{booktabs}       % professional-quality tables
\usepackage{amsfonts}       % blackboard math symbols
\usepackage{nicefrac}       % compact symbols for 1/2, etc.
\usepackage{microtype}      % microtypography
\usepackage{xcolor}         % colors

\usepackage{amsmath}
\usepackage{amssymb}
\usepackage{amsthm}
\usepackage{thmtools, thm-restate}
\usepackage{listings}

\usepackage{hyperref}
\hypersetup{
    colorlinks,
    linkcolor={red!80!black},
    citecolor={blue!80!black},
    urlcolor={blue!80!black}
}
\usepackage{enumitem}
\usepackage{pdfcomment}
\usepackage{tikz}
\usepackage{tikz-cd}
\usetikzlibrary{arrows}
\usepackage{xcolor}
\usepackage[font=small,labelfont=bf]{caption}
\usepackage{float}

\newtheorem{theorem}{Theorem}[section]
\newtheorem{lemma}{Lemma}[theorem]
\newtheorem{proposition}[theorem]{Proposition}
\newtheorem{fact}[theorem]{Fact}
\newtheorem{corollary}[theorem]{Corollary}
\theoremstyle{definition}
\newtheorem{definition}[theorem]{Definition}
\newtheorem{example}[theorem]{Example}

\theoremstyle{remark}
\newtheorem{remark}[theorem]{Remark}

\newcommand{\sgn}{\text{sgn}}
\DeclareMathOperator{\cl}{cl}
\DeclareMathOperator{\ran}{ran}
\newcommand{\R}{\mathbb{R}}
\newcommand{\NN}{\mathbb{N}}

\newcommand{\Clus}{\mathbb{K}}

\newcommand{\bd}[1]{\mathbf{#1}}  % for bolding symbols
\DeclareMathOperator*{\argmax}{argmax}

\lstset{frame=TB,
    numbersep=1pt,                   % how far the line-numbers are from the code
    columns=fullflexible,
    mathescape=true,
    escapeinside={(*}{*)},
    morekeywords={until, pick, restart, input, initialise, return, for, update, move}
}

\title{Multidimensional Hopfield Networks for clustering}

% The \author macro works with any number of authors. There are two commands
% used to separate the names and addresses of multiple authors: \And and \AND.
%
% Using \And between authors leaves it to LaTeX to determine where to break the
% lines. Using \AND forces a line break at that point. So, if LaTeX puts 3 of 4
% authors names on the first line, and the last on the second line, try using
% \AND instead of \And before the third author name.

\author{%
Gergely Stomfai \\ University of Cambridge, Synerise
\And
Łukasz Sienkiewicz \\ Synerise
\And
Barbara Rychalska \\ Synerise
  % examples of more authors
  % \And
  % Coauthor \\
  % Affiliation \\
  % Address \\
  % \texttt{email} \\
  % \AND
  % Coauthor \\
  % Affiliation \\
  % Address \\
  % \texttt{email} \\
  % \And
  % Coauthor \\
  % Affiliation \\
  % Address \\
  % \texttt{email} \\
  % \And
  % Coauthor \\
  % Affiliation \\
  % Address \\
  % \texttt{email} \\
}

\begin{document}

\maketitle

\begin{abstract}
    We present the Multidimensional Hopfield Network (DHN), a natural generalisation of the Hopfield Network. In our theoretical investigations we focus on DHNs with a certain activation function and provide energy functions for them. We conclude that these DHNs are convergent in finite time, and are equivalent to greedy methods that aim to find graph clusterings of locally minimal cuts. We also show that the general framework of DHNs encapsulates several previously known algorithms used for generating graph embeddings and clusterings. Namely, the Cleora graph embedding algorithm, the Louvain method, and the Newman's method can be cast as DHNs with appropriate activation function and update rule. Motivated by these findings we provide a generalisation of Newman's method to the multidimensional case.
\end{abstract}

\section{Multidimensional Hopfield Networks}\label{section:mhn}
\noindent
For the reader's convenience we introduce some notation that is used throughout the paper. If $n,d$ are positive integers, then $\mathbb{M}_{n\times d}(\R)$ denotes the set of $n\times d$ matrices with entries in $\R$. If $X \in \mathbb{M}_{n\times d}(\R)$ is a matrix, then $X^T$ denotes its transpose and provided that $n = d$, the trace of $X$ is denoted by $\mathrm{Tr}(X)$. Let $F:A\rightarrow B$ be a function between the sets $A,B$. Then its range i.e. the subset
$$\big\{F(a)\,\big|\,a\in A\big\}\subseteq B$$
is denoted by $\ran(F)$.
\noindent
Now we turn to the general framework that encapsulates all the methods to follow.
\begin{definition}\label{defintion:mhn}
    Let $n,d$ be positive integers. \textit{A $d$-dimensional Hopfield network with $n$ neurons} is a triple $(W,B,F)$ such that $W \in \mathrm{M}_{n\times n}(\R)$, $B \in \mathrm{M}_{n\times d}(\R)$ and $F:\R^{d} \rightarrow \R^{d}$ is a function, called the \textit{activation function}. Further, each neuron $i$ of the DHN has a \textit{state vector} $X_i \in \R^d$.
\end{definition}
\noindent
Let us interpret the notion above in a more intuitive way, which is more familiar to machine learning practitioners. Suppose that $(W,B,F)$ is a two-dimensional Hopfield network with three neurons. Then $W \in \mathrm{M}_{3\times 3}(\R)$ is interpreted as the connection weights matrix between the three neurons.
\begin{center}
\begin{tikzpicture}[scale=0.5, transform shape]
    \tikzset{vertex/.style = {shape=circle,fill=black!0!blue,draw,minimum size=1.em}}
    \tikzset{edge/.style = {->,> = latex', line width=0.8pt}}
    \node[vertex] (1) at (-3,0) {};
    \node[vertex] (2) at (3,0) {};
    \node[vertex] (3) at (0,5.196) {};

    \draw[edge] (1) to[out=340, in=200] node[below] {$W_{12}$} (2);
    \draw[edge] (2) to[out=160, in=20] node[above] {$W_{21}$} (1);

    \draw[edge] (2) to[out=100, in=320] node[below=1pt, left=2pt] {$W_{23}$} (3);
    \draw[edge] (3) to[out=280, in=140] node[below=7pt, left=-4pt] {$W_{32}$} (2);

    \draw[edge] (3) to[out=260, in=40] node[auto] {$W_{31}$} (1);
    \draw[edge] (1) to[out=80, in=220] node[auto] {$W_{13}$} (3);

    \draw[edge] (1) to[out=160, in=260, distance=1.5cm] node[below, left] {$W_{11}$} (1);
    \draw[edge] (2) to[out=20, in=280, distance=1.5cm] node[below, right] {$W_{22}$} (2);
    \draw[edge] (3) to[out=40, in=160, distance=1.5cm] node[above] {$W_{33}$} (3);
\end{tikzpicture}
\end{center}
The entries of the matrix $B\in \mathrm{M}_{3\times 2}(\R)$ are biases corresponding to each neuron and each state, while $F:\R^2\rightarrow \R^2$ is the common activation function for all the neurons in the network. The state vectors of the neurons are vectors in $\R^2$. From this understanding of the network it is clear, that DHNs are a subclass of recurrent neural networks. \\
Since recurrent neural networks determine discrete time dynamical systems on the space of neuron states, each DHN gives rise to such a dynamical system.

\begin{definition}
    Let $(W,B,F)$ be a $d$-dimensional Hopfield network with $n$-neurons. Let $X_i(t) \in \R^d$ denote the state of the $i$-th neuron at time $t\in \NN$. We consider the following two types of discrete time dynamical systems determined by $(W,B,F)$ and the initial states $X_i(0)$ of the neurons. 
    \begin{itemize}
        \item In \textit{serial mode of operation} given that the neurons are in the states
        $$X_1(t),...,X_n(t)\in \R^d$$
        at time $t \in \NN$, we pick a neuron $i \in \{1,...,n\}$ and update it's state according to the rule
        $$X_{i}(t+1) = F\left(\sum_{j=1}^nW_{ij}\cdot X_j(t) + B_i\right)$$
        while the states of all other neurons stay unchanged, i.e. $X_j(t+1) = X_j(t)$ for all $j \neq i$. Note that the dynamics of the system depends on the order in which the neurons are updated.
        \item In \textit{parallel mode of operation} given that the neurons are in the states
        $$X_1(t),...,X_n(t)\in \R^d$$
        at time $t \in \NN$, all neurons are updated simultaneously by the same rule as in serial mode (see Remark \ref{remark:activation_application_to_rows}).
    \end{itemize}
\end{definition}

\begin{remark}\label{remark:notation_for_states_of_neurons}
    Let $(W,B,F)$ be a $d$-dimensional Hopfield network with $n$-neurons operating in either serial or parallel mode. In the sequel we denote by $X(t)$ the matrix which has the neuron state vectors $X_1(t),...,X_n(t)$ as rows and call it the \textit{matrix of neuron states}.
\end{remark}
\noindent
One can find both one and two dimensional DHNs that mimic the behaviour of the original Hopfield networks introduced in \cite{hopfield:1982}.\\
In some cases, DHNs are only updated according to the parallel mode. This comes with some computational ease, and we can also be a bit more vague about the nature of the activation function, as Remark \ref{remark:full_matrix_update_in_parallel_mode} explains.

\begin{remark}\label{remark:activation_application_to_rows}
    If $F: \R^{d} \to \R^{d}$, then for a $M \in \mathrm{M}_{n\times d}(\R)$ we denote by $F(M)$ the result of applying $F$ to every row of $M$. Now let $(W,B,F)$ be a $d$-dimensional Hopfield network with $n$ neurons. Then, in parallel mode of operation the update can be written in the following form
    $$X(t+1) = F \left( W X(t) + B \right)$$
    for every $t \in \NN$. 
\end{remark}

\begin{remark}\label{remark:full_matrix_update_in_parallel_mode}
    The matrix form of the parallel update noted in Remark \ref{remark:activation_application_to_rows} shows that for this mode of operation one can set $F$ to be a function on matrices, i.e. $F: \mathbb{M}_{n \times d}(\R) \to \mathbb{M}_{n \times d}(\R)$. Although strictly speaking Definition \ref{defintion:mhn} does not allow this, as long as the network is only updated in parallel mode, it poses no practical difficulty, but allows more intricate behaviour in some cases and will prove to be useful later.
\end{remark}

\begin{example}\label{example:cleora_as_DHN}
    Consider a weighted graph with $n$ nodes and edge weight matrix $W$. Then for a fixed positive integer $d$ we set $F:\R^d \rightarrow \R^d$ to be the $l_2$-normalization function, i.e.
    $$F(x) = \frac{x}{\lVert x\rVert _2}$$
    if $x \neq 0$ and $F(0) = 0$. This gives rise to a $d$-dimensional Hopfield network $\left(W,0,F\right)$ with $n$ neurons. Further, if the entries of the initial neuron states matrix $X(0) \in \mathbb{M}_{n\times d}(\R)$ are sampled from $\mathcal{U}(-1,1)$, then $\left(W,0,F\right)$ operating in parallel mode is equivalent to running the Cleora algorithm from \cite{cleora:2021}. 
\end{example}

\begin{remark}\label{remark:range_of_activation_function}
    Let $(W,B,F)$ be a $d$-dimensional Hopfield network with $n$-neurons operating in either serial or parallel mode. Then by means of Remark \ref{remark:activation_application_to_rows} and without loss of generality we may assume that $X(t) \in \ran(F)$ for every $t\in \NN$ where
    $$\ran(F) = \big\{F(M)\,\big|\,M\in \mathbb{M}_{n\times d}(\R)\big\}$$
\end{remark}

\section{DHNs with classification function}\label{section:mhns_with_classification_activation}
\noindent
In this section we investigate the convergence properties of DHNs with a particular activation function.

\begin{definition}\label{definition:classification_function}
    The function $\cl: \mathbb{R}^d \rightarrow \mathbb{R}^d$ defined as
    $${\cl(x)}_i = \delta_i \left(\argmax_j x_j \right)$$
    for each $i \in \{1,...,d\}$ is the \textit{classification function for $\R^d$}. We also denote the image of $\cl$ in $\mathbb{R}^d$ by $\mathbb{L}_d$ and call it \textit{the label space of dimension $d$}.  
\end{definition}
\noindent
For DHNs with classification function as the activation we can provide an energy function -- a function that is decreasing along the trajectories obtained by serial mode operations.

\begin{restatable}{theorem}{energythm}\label{theorem:energy_for_DHN_with_cl_activation}
Suppose that $W \in \mathbb{M}_{n\times n}(\R)$ is a symmetric matrix with nonnegative entries on the diagonal and $B \in \mathbb{M}_{n\times d}(\R)$. Then the function
$$V\left(X\right) = -\text{Tr} \left( X^T W X + 2 X^T B \right)$$
is an energy function function for the multidimensional Hopfield network $(W, B,\cl)$ operating in serial mode.
\end{restatable}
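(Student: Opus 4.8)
The plan is to show that whenever we update a single neuron $i$ in serial mode, the quantity $V(X)$ does not increase, and strictly decreases unless the state of neuron $i$ was already a fixed point of the update rule. Since $V$ only depends on $X$ through finitely many possible values (as $X(t) \in \mathbb{L}_d^n$ for $t \geq 1$ by Remark \ref{remark:range_of_activation_function}, a finite set), a non-increasing energy that is constant only at fixed points forces convergence in finite time; so the crux is the one-step inequality.

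First I would fix a time $t$, let $i$ be the neuron selected for update, write $X = X(t)$ and $X' = X(t+1)$, and observe that $X'$ and $X$ agree in every row except row $i$. Set $h := \sum_{j=1}^n W_{ij} X_j + B_i \in \R^d$ (the pre-activation vector for neuron $i$), so that $X'_i = \cl(h)$. The next step is to expand the difference $V(X) - V(X')$ using bilinearity of the trace expression. Because only row $i$ changes, the difference collapses to terms involving $X_i$, $X'_i$, the $i$-th row/column of $W$, and $B_i$; using symmetry of $W$ the cross terms $X^T W X$ contribute $\sum_{j\neq i} W_{ij}\langle X_j, X_i - X'_i\rangle$ twice, the diagonal term contributes $W_{ii}(\langle X_i,X_i\rangle - \langle X'_i,X'_i\rangle)$, and the bias term contributes $2\langle B_i, X_i - X'_i\rangle$. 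Collecting, one gets something of the shape
$$V(X) - V(X') = 2\langle h, X'_i - X_i\rangle - W_{ii}\big(\langle X'_i,X'_i\rangle - \langle X_i, X_i\rangle\big),$$
after carefully splitting $h$ into its "$j\neq i$" part and its "$j=i$" part $W_{ii}X_i$ and recombining. I would double-check this identity by a direct index computation, since the bookkeeping of which $W_{ii}$ terms land where is the easiest place to slip.

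The final step is to argue the right-hand side is $\geq 0$. Since $X_i$ and $X'_i$ both lie in the label space $\mathbb{L}_d$, each is an indicator vector $e_k$ for some coordinate $k$, so $\langle X'_i, X'_i\rangle = \langle X_i, X_i\rangle = 1$ (or both $0$ in the degenerate all-zero case, which I would note separately), and the $W_{ii}$ term vanishes outright — this is exactly where the hypothesis that the diagonal of $W$ is nonnegative would matter if the squared norms did not cancel, but with the classification activation they do cancel, so in fact nonnegativity is only needed to cover edge cases or to keep the statement robust. Then $V(X) - V(X') = 2\langle h, X'_i - X_i\rangle = 2\big(\langle h, e_{k'}\rangle - \langle h, e_k\rangle\big) = 2\big(h_{k'} - h_k\big)$ where $k' = \argmax_j h_j$ by definition of $\cl$, hence $h_{k'} \geq h_k$ and the difference is $\geq 0$, with equality precisely when $h_k$ is also maximal, i.e. when $X_i$ was already an admissible value of $\cl(h)$ (a fixed point up to the tie-breaking convention in $\argmax$). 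I expect the main obstacle to be handling the $\argmax$ tie-breaking cleanly — one must make sure "no strict decrease" is correctly identified with "$X_i$ is unchanged under the update" so that the finite-state argument actually yields convergence rather than cycling through tied configurations; stating the energy-function conclusion for $V$ itself (monotonicity along trajectories) sidesteps this, but if finite-time convergence is claimed downstream, the tie-breaking must be pinned down, e.g. by taking the least index attaining the maximum so that $\cl$ is single-valued and $X'_i = X_i$ exactly when equality holds.
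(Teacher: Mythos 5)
Your overall strategy (expand the quadratic trace form under a single-row change and compare $h_{k'}$ with $h_k$) is the same as the paper's, but your key identity is wrong, and the error is not cosmetic: it leads you to the false conclusion that the nonnegative-diagonal hypothesis is essentially superfluous. With $h=\sum_{j=1}^n W_{ij}X_j+B_i$ as you define it (i.e.\ including the $j=i$ term), the correct one-step change is
$$V(X)-V(X') \;=\; 2\,\langle h,\,X'_i-X_i\rangle \;+\; W_{ii}\,\lVert X'_i-X_i\rVert^2,$$
not $2\langle h,X'_i-X_i\rangle-W_{ii}\bigl(\lVert X'_i\rVert^2-\lVert X_i\rVert^2\bigr)$: when you split off the $j=i$ part $W_{ii}X_i$ of $h$ you pick up an extra $-2W_{ii}\langle X_i,X'_i-X_i\rangle$, which combines with $W_{ii}\bigl(\lVert X'_i\rVert^2-\lVert X_i\rVert^2\bigr)$ into the full square $W_{ii}\lVert X'_i-X_i\rVert^2$. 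Since $X_i$ and $X'_i$ are distinct standard basis vectors whenever the neuron actually moves, this residual equals $2W_{ii}$ and does not cancel despite the unit norms; it is exactly the term $\mathrm{Tr}(\Delta^TW\Delta)$ in the paper's Lemma \ref{lemma:delta_of_standard_energy_function}, and it is precisely where $W_{ii}\geq 0$ is used. The hypothesis cannot be dropped: take $n=1$, $d=2$, $W_{11}=-1$, $B_1=(0,-\tfrac12)$, $X_1=e_1$; then $h=(-1,-\tfrac12)$, the serial update moves the neuron to $e_2$, and $V$ increases from $1$ to $2$. So the sentence claiming nonnegativity is "only needed to cover edge cases" must go, and the inequality rerun with the corrected identity, after which the argument closes exactly as in the paper via $h_{k'}\geq h_k$ together with $2W_{ii}\geq 0$.

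Your remark about $\argmax$ tie-breaking is well taken and secondary to the above: the paper's assertion that $H_{kj_1}-H_{kj_2}>0$ whenever $j_1\neq j_2$ tacitly assumes a convention under which the state only changes on strict improvement, and pinning $\cl$ down (e.g.\ keep the current label on ties, or a fixed least-index rule combined with identifying "no strict decrease" with "no state change") is indeed what makes the "equality iff the neuron does not move" clause, and hence the finite-time convergence invoked in Theorem \ref{theorem:convergence_of_DHN_with_cl_activation}(1), sound.
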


\begin{restatable}{theorem}{convergencethm}\label{theorem:convergence_of_DHN_with_cl_activation}
    Let $(W,B, \cl)$ be a $d$-dimensional Hopfield network with $n$ neurons. Suppose that $W$ is symmetric. Then the following assertions hold.
    \begin{enumerate}[label=\emph{\textbf{(\arabic*)}}, leftmargin=*]
        \item If $W$ has nonnegative entries on the diagonal and $(W,B,\cl)$ operates in a serial mode, then the corresponding dynamical system converges to a stable state for every initial state of the neurons.
        \item If $(W,B,\cl)$ is operating in parallel mode, then the corresponding dynamical system converges to a cycle of length at most $2$ for every initial state of the neurons.
    \end{enumerate}
\end{restatable}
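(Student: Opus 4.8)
The plan is to treat the two parts separately, but in both the engine is the same finiteness observation: by Remark~\ref{remark:range_of_activation_function} every row of the state matrix $X(t)$ lies in the finite label set $\mathbb{L}_d$ once $t\ge 1$, so $X(t)$ ranges over a finite set, and any real-valued quantity that is non-increasing along the trajectory is automatically eventually constant; the remaining work is to read off convergence from the equality case.

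For \textbf{(1)} I would start from Theorem~\ref{theorem:energy_for_DHN_with_cl_activation}: $V(X) = -\mathrm{Tr}(X^TWX + 2X^TB)$ is non-increasing in serial mode. The computation underlying that theorem — which I would reuse — shows that when neuron $p$ is updated, with input $h_p = \sum_j W_{pj}X_j + B_p$ and state changing from $x$ to $x' = \cl(h_p)$,
$$V(X') - V(X) \;=\; -2\,\langle x' - x,\, h_p\rangle \;-\; W_{pp}\,\lVert x' - x\rVert^2,$$
and both summands are $\le 0$, since $\langle x', h_p\rangle = \max_k (h_p)_k \ge \langle x, h_p\rangle$ (as $x\in\mathbb{L}_d$) and $W_{pp}\ge 0$. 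As $V$ takes only finitely many values it is eventually constant, and because the right-hand side above is strictly negative whenever the update actually changes $x$ (up to the tie caveat noted at the end), the state matrix is eventually constant, i.e. the system reaches a stable state.

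For \textbf{(2)} the single-state $V$ need not be monotone, so I would introduce a two-state energy, in the spirit of the classical analysis of parallel symmetric Hopfield networks. Writing $h(t) = WX(t)+B$, so that $X(t+1) = \cl(h(t))$ row by row, set
$$E(t) \;:=\; -\mathrm{Tr}\!\big(X(t)^T W X(t+1)\big) \;-\; \mathrm{Tr}\!\big((X(t)+X(t+1))^T B\big).$$
Using $W^T=W$ one obtains the telescoping identity
$$E(t) - E(t-1) \;=\; -\,\mathrm{Tr}\!\big((X(t+1)-X(t-1))^T h(t)\big) \;=\; -\sum_{p=1}^n\big(\langle X_p(t+1), h_p(t)\rangle - \langle X_p(t-1), h_p(t)\rangle\big),$$
in which every summand is $\ge 0$ for $t\ge 2$ because $\langle X_p(t+1),h_p(t)\rangle = \max_k (h_p(t))_k$ while $X_p(t-1)\in\mathbb{L}_d$. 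Hence $E$ is non-increasing with finite range, so eventually constant; equality then forces $X_p(t-1)$ to realise the maximum of $h_p(t)$ for every $p$, whence $X_p(t+1) = \cl(h_p(t)) = X_p(t-1)$ (again up to tie-breaking), i.e. $X(t+1) = X(t-1)$ for all large $t$ — convergence to a cycle of length at most $2$. A slicker alternative is to deduce \textbf{(2)} from \textbf{(1)}: apply part \textbf{(1)} to the $2n$-neuron network with symmetric zero-diagonal weight matrix $\left(\begin{smallmatrix} 0 & W\\ W & 0\end{smallmatrix}\right)$ and stacked biases $\left(\begin{smallmatrix} B\\ B\end{smallmatrix}\right)$, run serially in the order ``all first-copy neurons, then all second-copy neurons, repeated''; one such sweep implements two parallel steps of $(W,B,\cl)$, so a stable state of the doubled serial system is precisely a cycle of length $\le 2$ of the parallel one.

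The delicate point in both parts — and the one I expect to be the main obstacle — is the equality case combined with the tie behaviour of $\cl$: the energy difference can vanish while the state still changes, and this happens exactly when $\cl$ is evaluated at a vector whose maximal coordinate is attained both at the old state of the neuron and at some other index that $\argmax$ prefers (in part~\textbf{(1)} this additionally needs $W_{pp}=0$). One must argue this cannot produce a genuine oscillation: either fix once and for all the tie-breaking rule in $\argmax$ and verify such a flip occurs at most once for each fixed input configuration, or note that such a flip strictly decreases an auxiliary lexicographic potential while leaving $V$ (resp.\ $E$) unchanged. Apart from this, the proof is the standard finite-state, monotone-potential argument, carried over from the scalar-state Hopfield setting to the present label-vector-valued one.
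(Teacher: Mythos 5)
Your proposal is correct, and it essentially contains the paper's argument as a special case while adding an alternative. For part \textbf{(1)} you do what the paper does: invoke the energy function of Theorem \ref{theorem:energy_for_DHN_with_cl_activation} and combine monotonicity with the finiteness of $\Clus_{n\times d}(\R)$ (the paper states this step as ``immediate''; you spell out the finite-range argument, and your expression $-2\langle x'-x,h_p\rangle - W_{pp}\lVert x'-x\rVert^2$ is in fact cleaner than the paper's, which misattributes the quadratic term to the diagonal entries $W_{j_1j_1},W_{j_2j_2}$ rather than $2W_{kk}$). For part \textbf{(2)} your primary route — the two-step Lyapunov function $E(t)=-\mathrm{Tr}(X(t)^TWX(t+1))-\mathrm{Tr}((X(t)+X(t+1))^TB)$ with the telescoping identity $E(t)-E(t-1)=-\mathrm{Tr}((X(t+1)-X(t-1))^Th(t))$ — is a genuinely different, self-contained argument (the classical Goles-type bivariate energy transplanted to label-valued states), whereas the paper follows Bruck's reduction: it builds the bipartite $2n$-neuron network with weights $\left(\begin{smallmatrix}0&W\\ W&0\end{smallmatrix}\right)$ and stacked biases, runs it serially in sweeps, and applies part \textbf{(1)}; your ``slicker alternative'' is exactly this. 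The direct energy gives an explicit decreasing quantity for the parallel dynamics and avoids the induction matching serial sweeps to parallel steps; the doubling argument reuses part \textbf{(1)} wholesale and needs no new energy, at the cost of the bookkeeping on the auxiliary network. Your explicit flagging of the tie-breaking/equality case is also apt: the paper's own proof of the energy theorem silently assumes strictness ($H_{kj_1}-H_{kj_2}>0$), which fails under ties with a fixed $\argmax$ convention, so the caveat you raise applies to the paper's argument as much as to yours and would need the kind of auxiliary tie-handling you sketch to be fully rigorous.
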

\noindent
For proofs we refer the reader to Appendix \ref{appendix:proofs_of_convergence_theorems}. Moreover, the results of \cite{bruck:1990} concerning the relationship between Hopfield networks and greedy algorithms solving graph min-cut problem can also be generalised to DHNs with classification function. DHNs though, solve a generalised version of min-cut problem. We discuss this at length in Appendix \ref{appendix:greedy_graph_clustering}.      

\section{DHNs for optimising modularity}
\noindent
The modularity matrix of a graph was defined in \cite{newman:2006}. We exhibit that two well known methods of maximising graph modularity, namely Newman's method from \cite{newman:2006} and the Louvain method from \cite{blondel:2008} can be viewed as certain DHNs. Extended discussion on the topic, like proofs etc. can be found in Appendix \ref{appendix:details_on_louvain_and_newman}  

\subsection{The Louvain method}
\noindent
The Louvain method \cite{blondel:2008} is a popular graph clustering method, based around the heuristic idea of greedy local search, using the modularity of the clustering as an objective function to maximise. Strictly speaking, after reaching a local optimum, the Louvain method merges nodes in the same cluster together -- this part of the algorithm is irrelevant for our analysis.

\begin{proposition}\label{proposition:louvain_hopfield_net_equivalence}
    For every weighted graph $G$ there exists a DHN with classification function as activation such that running the Louvain method on $G$ with any initial choice of clusters can be cast as running this DHN in serial mode of operation and some initialization.
\end{proposition}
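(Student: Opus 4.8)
The plan is to encode clusterings of $G$ as matrices with rows in $\mathbb{L}_d$, to recognise graph modularity as a quadratic form in such a matrix, and thereby to turn a single Louvain local move into one $\cl$-update of a suitable network; only the local-moving phase of Louvain is relevant, as noted above.

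First I would set up notation: let $A$ be the weight matrix of $G$, put $k_i=\sum_j A_{ij}$ and $2m=\sum_i k_i$, and let $M$ be the (symmetric) modularity matrix $M_{ij}=A_{ij}-k_ik_j/(2m)$ of \cite{newman:2006}. Fixing $d\ge n$, I would identify a partition of the $n$ nodes (into at most $n$ parts) with the ``label matrix'' $X\in\mathbb{M}_{n\times d}(\R)$ whose $i$-th row is the indicator vector in $\mathbb{L}_d$ of the part containing node $i$; conversely every $X$ with all rows in $\mathbb{L}_d$ arises this way. For such an $X$, nodes $i$ and $j$ lie in the same part iff $(XX^T)_{ij}=1$, so the modularity of the clustering equals $Q(X)=\tfrac1{2m}\mathrm{Tr}(X^TMX)$.

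Next I would take the $d$-dimensional Hopfield network with $n$ neurons $(W,0,\cl)$, where $W:=M-\mathrm{diag}(M_{11},\dots,M_{nn})$. This $W$ is symmetric with zero (hence nonnegative) diagonal, so Theorems~\ref{theorem:energy_for_DHN_with_cl_activation} and~\ref{theorem:convergence_of_DHN_with_cl_activation} apply; and since every row of a label matrix is a unit vector, $\mathrm{Tr}\big(X^T\mathrm{diag}(M_{11},\dots,M_{nn})X\big)=\sum_i M_{ii}$ is the same for all label matrices, so $V(X)=-\mathrm{Tr}(X^TWX)=-2m\,Q(X)+\mathrm{const}$ on them; thus the energy is $-2m$ times the modularity up to an additive constant, and the network halts, mirroring the termination of Louvain's local-moving phase. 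Then I would verify that the update rules coincide: updating neuron $i$ in serial mode replaces its state by $\cl(v)$, where $v=\sum_j W_{ij}X_j$, i.e.\ $v_\ell=\sum_{j\neq i,\ c_j=\ell}M_{ij}$ with $c_j$ the part of node $j$, the term $j=i$ dropping out because $W_{ii}=0$. A short computation (only the pairs involving $i$ change) shows that $v_\ell$ equals, up to a constant independent of $\ell$, the number $m$ times the modularity gain of reassigning node $i$ to part $\ell$; and for an empty label $\ell$, one of which exists since $d\ge n$, one has $v_\ell=0$, the gain of leaving $i$ isolated. Hence $\cl(v)$ assigns node $i$ to exactly the part that the Louvain step would choose. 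Finally, initialising $X(0)$ as the label matrix of Louvain's starting clustering and updating neurons in the order in which Louvain visits nodes yields a step-by-step identification of the two trajectories.

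The gain computation is routine; the points that need care are, first, fixing one tie-breaking convention for the $\argmax$ inside $\cl$ (which part to choose when several of them, possibly including $i$'s current part or an empty label, attain the maximal gain) and using the same convention for both the network and Louvain, ties being generically absent; and, second, the observation that it is precisely the subtraction of $\mathrm{diag}(M)$, rather than, e.g., adding a multiple of $I$ to make the diagonal nonnegative, that cancels the self-interaction $M_{ii}$ and lets a single evaluation of $\cl$ reproduce Louvain's ``remove $i$, then re-insert it optimally'' step; one also has to take $d\ge n$ so that an empty label, i.e.\ the option of making $i$ a singleton, is always available. I expect the tie-breaking bookkeeping to be the only genuinely delicate part.
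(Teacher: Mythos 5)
Your proposal is correct and takes essentially the same route as the paper: the paper's Proposition~\ref{proposition:louvain_hopfieldnet_equivalence} likewise uses the modularity matrix with zeroed diagonal as the weight matrix of a zero-bias DHN with $\cl$ activation, shows that a single serial update of neuron $u$ realises exactly the Louvain move for node $u$ (the zero diagonal cancelling the self-interaction, just as your subtraction of $\mathrm{diag}(M)$ does), and then matches initial clustering matrix and update order. Your additional energy-to-modularity identification and the $d\ge n$ empty-label bookkeeping are harmless extras on top of the same single-step equivalence.
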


For the proof see Proposition \ref{proposition:louvain_hopfieldnet_equivalence} in \ref{appendix:details_on_louvain_and_newman}.

\subsection{Newman's and other iterative methods}
\noindent
Similarly one can easily construct a DHN which is equivalent to the power method used in finding the leading eigenvector of matrices. Using this observation one can also implement Newman's method from \cite{newman:2006}. For details see \ref{appendix:details_on_louvain_and_newman}.\\  
Writing Newman's method as a DHN allows us to extend it to the case of multi-label clustering, simply by considering a DHN of higher dimensions. Let $n,d$ be positive integers. Then a subset
$$V_d(\R^n) = \big\{O \in \mathbb{M}_{n\times d}(\R)\,\big|\,O^TO = I_d\big\} \subseteq \mathbb{M}_{n\times d}(\R)$$
is \textit{the Stiefel manifold of orthonormal $d$-frames in $\R^n$}. Let $P_{V_d(\R^n)} : \mathbb{M}_{n \times d}(\R) \to V_d(\R^n)$ be such that $P_{V_d(\R^n)}(M) = \argmax_{S \in V_d(\R^n)} \text{Tr} \left( S^T M \right)$ for every $M \in \mathbb{M}_{n\times d}(\R)$.
We set $F=P_{V_d(\R^n)}$ - thus ensuring that $X(t) \in V_d(\R^n)$. See Remark \ref{remark:full_matrix_update_in_parallel_mode} about using matrix valued functions for $F$. The existence of $P_{V_d(\R^n)}$ can be verified constructively, for example by considering the QR decomposition algorithm (see \cite{allaire:2008} for more details).

\begin{lstlisting}[numbers=left,caption={\label{listing:generalised_newman}\textbf{Generalised Newman method}.},belowskip=5mm]
    input modularity matrix $Q$
    initialise neuron states matrix of $(Q, \bd{0}, P_{V_d(\R^n)})$ randomly 
    until the neuron states matrix of $(Q, \bd{0}, P_{V_d(\R^n)})$ converges
        update neurons of $(Q, \bd{0}, P_{V_d(\R^n)})$ in parallel mode
    return $\cl(\text{neuron states matrix})$
\end{lstlisting}

\begin{table}
    \centering
    \begin{tabular}{c||c|c}
        & $F =P_{V_d(\R^n)}$ & $F = \cl$ \\
        \hline \hline
        serial & SGNM & LMS \\
        \hline  
        parallel & GNM & PLMS
    \end{tabular}
    \caption{\label{table:propagation_methods} Propagation methods naturally arising from generalising Newman's method to the multidimensional case. The abbreviations refer to: SGNM serial generalized Newman method, LMS: Louvain method search, which refers to the first phase of the Louvain method, PLMS: parallel Louvain method search, GNM: Generalised Newman method.}
\end{table}
\noindent
By varying the mode of operation and the post-processing function we obtain several new methods that can again be used to find clusterings. We summarise some of these in Table \ref{table:propagation_methods}. The methods GNM, SGNM and PLMGS are previously unexplored according to our knowledge. They are methods that inherit properties of both the Louvain and Newman's methods to some extent, and therefore might be of interest for finding composite methods combining the strengths of the previously existing ones. We present some experimental result obtained from running these methods on well-known graphs in Table \ref{table:experiemnt_results}. \\
We emphasize that the settings we explored are not exhaustive, and they only serve here as a demonstration. We highlight that finding the optimal settings poses an open question.

\begin{table}
    \centering
    \begin{tabular}{c||c|c|c|c}
        & Cora & Citeseer & PubMed & Photoes \\
        \hline \hline
        LMS & 0.5510 & 0.6659 & 0.5602 & \textbf{0.6898}  \\
        \hline  
        GNM & 0.5754 & 0.5901 & 0.4489 & 0.4901 \\
        \hline
        GNM + one iteration LMS & \textbf{0.7147} & \textbf{0.7292} & \textbf{0.6580} & 0.6786 \\
        \hline
        PLMS & 0.5019 & 0.5292 & 0.4245 & 0.6042
    \end{tabular}
    \small
    \caption{\label{table:experiemnt_results} Comparison of modularity value for some of the methods from Table \ref{table:propagation_methods}. All parallel methods were halted when converged. Both parallel methods were run using $64$ dimensional neuron states, and thus were limited to $64$ clusters at any point in time. We delegate benchmarking of SGNM to further experiments.}
\end{table}

\section{Conclusions}
\noindent
We introduced DHNs as generalisation of Hopfield networks. By providing an energy function and generalising results of \cite{bruck:1990} we supported the claim that DHNs with classification function as activation are natural multidimensional analogues of classical Hopfield networks. We also provide equivalence between DHNs and graph clusterings methods like the Louvain method and the Newman's method. This enables us to generalise these two approaches, which gives rise to promising and effective modularity maximisers. Note that we did not discuss associative content-addressable memory capacity of DHNs even for classification function, or methods for training DHNs. We leave the exploration of this subject as a promising direction for further research.       

\small
\bibliographystyle{apalike}
\bibliography{refs}

\begin{thebibliography}{}

\bibitem[Allaire and Kaber, 2008]{allaire:2008}
Allaire, G. and Kaber, S.~M. (2008).
\newblock {\em Numerical Linear Algebra}.
\newblock Springer, New York, NY, 1st edition.

\bibitem[Blondel et~al., 2008]{blondel:2008}
Blondel, V.~D., Guillaume, J.-L., Lambiotte, R., and Lefebvre, E. (2008).
\newblock Fast unfolding of communities in large networks.
\newblock {\em Journal of Statistical Mechanics: Theory and Experiment},
  2008(10):P10008.

\bibitem[Bruck, 1990]{bruck:1990}
Bruck, J. (1990).
\newblock On the convergence properties of the hopfield model.
\newblock {\em Proceedings of the IEEE}, 78(10):1579--1585.

\bibitem[Hopfield, 1982]{hopfield:1982}
Hopfield, J.~J. (1982).
\newblock Neural networks and physical systems with emergent collective
  computational abilities.
\newblock {\em Proceedings of the National Academy of Sciences},
  79(8):2554--2558.

\bibitem[Newman, 2006a]{newman:2006:2}
Newman, M. E.~J. (2006a).
\newblock Finding community structure in networks using the eigenvectors of
  matrices.
\newblock {\em Physical Review E}, 74(3).

\bibitem[Newman, 2006b]{newman:2006}
Newman, M. E.~J. (2006b).
\newblock Modularity and community structure in networks.
\newblock {\em Proceedings of the National Academy of Sciences},
  103(23):8577--8582.

\bibitem[Newman and Girvan, 2004]{newman:2004}
Newman, M. E.~J. and Girvan, M. (2004).
\newblock Finding and evaluating community structure in networks.
\newblock {\em Phys. Rev. E}, 69:026113.

\bibitem[Rychalska et~al., 2021]{cleora:2021}
Rychalska, B., Babel, P., Goluchowski, K., Michalowski, A., and Dabrowski, J.
  (2021).
\newblock Cleora: {A} simple, strong and scalable graph embedding scheme.
\newblock {\em CoRR}, abs/2102.02302.

\end{thebibliography}

\appendix
\section{Appendices}

\subsection{Proofs of convergence theorems}\label{appendix:proofs_of_convergence_theorems}
\noindent
In this appendix we are supplying the reader with rigorous proofs of Theorems \ref{theorem:energy_for_DHN_with_cl_activation} and \ref{theorem:convergence_of_DHN_with_cl_activation}.

\energythm*
\noindent
For the proof we need the following technical lemma:

\begin{lemma}\label{lemma:delta_of_standard_energy_function}
    Let $W \in \mathbb{M}_{n\times n}(\R)$ be a symmetric matrix and let $B \in \mathbb{M}_{n\times d}(\R)$. Suppose that $X, \Delta \in \mathbb{M}_{n\times d}(\R)$. Define
    $$H = WX + B$$
    Let $V:\mathbb{M}_{n\times d}(\R)\rightarrow \R$ be the function given by
    $$V\left(X\right) = -\mathrm{Tr}\left(X^TWX + 2X^TB\right)$$
    Then
    $$V(X + \Delta) - V(X) = -2\cdot \mathrm{Tr} \left( \Delta^T H\right) - \mathrm{Tr} \left(\Delta^T W \Delta \right)$$
\end{lemma}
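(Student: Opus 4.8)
The plan is to prove the identity by direct expansion of the quadratic form, using only linearity of the trace, the cyclic/transpose invariance $\mathrm{Tr}(A) = \mathrm{Tr}(A^T)$, and the symmetry hypothesis $W = W^T$. First I would substitute $X + \Delta$ into the definition of $V$ and expand
$$(X+\Delta)^TW(X+\Delta) = X^TWX + X^TW\Delta + \Delta^TWX + \Delta^TW\Delta,$$
together with $2(X+\Delta)^TB = 2X^TB + 2\Delta^TB$. Taking traces and subtracting $V(X)$, the terms $-\mathrm{Tr}(X^TWX)$ and $-2\mathrm{Tr}(X^TB)$ cancel, leaving
$$V(X+\Delta) - V(X) = -\mathrm{Tr}\bigl(X^TW\Delta\bigr) - \mathrm{Tr}\bigl(\Delta^TWX\bigr) - \mathrm{Tr}\bigl(\Delta^TW\Delta\bigr) - 2\mathrm{Tr}\bigl(\Delta^TB\bigr).$$

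Next I would show the two cross terms coincide. Since $\mathrm{Tr}(M) = \mathrm{Tr}(M^T)$ for any square matrix $M$, we have $\mathrm{Tr}(X^TW\Delta) = \mathrm{Tr}\bigl((X^TW\Delta)^T\bigr) = \mathrm{Tr}(\Delta^TW^TX) = \mathrm{Tr}(\Delta^TWX)$, where the last equality uses $W^T = W$. Hence $-\mathrm{Tr}(X^TW\Delta) - \mathrm{Tr}(\Delta^TWX) = -2\mathrm{Tr}(\Delta^TWX)$.

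Finally I would recombine: $-2\mathrm{Tr}(\Delta^TWX) - 2\mathrm{Tr}(\Delta^TB) = -2\mathrm{Tr}\bigl(\Delta^T(WX + B)\bigr) = -2\mathrm{Tr}(\Delta^TH)$ by linearity of the trace and the definition $H = WX + B$, which yields exactly
$$V(X+\Delta) - V(X) = -2\,\mathrm{Tr}\bigl(\Delta^TH\bigr) - \mathrm{Tr}\bigl(\Delta^TW\Delta\bigr).$$
There is no real obstacle here: the only thing to be careful about is that all matrices involved are of compatible shapes so that the products and traces are well defined (e.g. $X^TW\Delta \in \mathbb{M}_{d\times d}(\R)$), and that symmetry of $W$ is genuinely needed precisely at the step identifying the two cross terms — without it one would only get $-\mathrm{Tr}(\Delta^T(W + W^T)X)$ instead.
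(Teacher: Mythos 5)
Your proposal is correct and follows essentially the same route as the paper's proof: direct expansion of $V(X+\Delta)$, identification of the two cross terms via $\mathrm{Tr}(M)=\mathrm{Tr}(M^T)$ together with $W=W^T$, and recombination into $-2\,\mathrm{Tr}(\Delta^T H)-\mathrm{Tr}(\Delta^T W\Delta)$ using linearity of the trace. Your closing remark that symmetry of $W$ is used exactly at the cross-term step matches the paper's argument as well.
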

\begin{proof}[Proof of the lemma]
    We have
    $$V(X + \Delta) - V(X) = $$
    $$=-\mathrm{Tr} \bigg( (X+\Delta)^T W (X +\Delta) + 2 (X +\Delta)^T B \bigg) +  \mathrm{Tr} \bigg( X^T W X + 2 X^T B \bigg) =$$
    $$= -\mathrm{Tr} \bigg( X^T W \Delta + \Delta^T W X + \Delta^T W \Delta + 2\Delta^T B\bigg)$$
    Since $W$ is symmetric, and the trace of a square matrix equals the trace of its transpose, we can write
    $$\mathrm{Tr}\big(X^T W \Delta\big) = \mathrm{Tr}\bigg(\big(X^T W \Delta\big)^T\bigg) = \mathrm{Tr}\big(\Delta^T W^T X\big) = \mathrm{Tr}\big(\Delta^T W X\big)$$
    Since $\mathrm{Tr}$ is linear, we obtain
    $$\mathrm{Tr} \bigg( X^T W \Delta + \Delta^T W X + \Delta^T W \Delta + 2\Delta^T B\bigg) = 
    \mathrm{Tr} \bigg(2\Delta^T W X + 2\Delta^T B\bigg) + \mathrm{Tr} \left(\Delta^T W \Delta \right) = $$
    $$= 2\cdot \mathrm{Tr} \bigg(\Delta^T \big(W X + B\big)\bigg) + \mathrm{Tr} \left(\Delta^T W \Delta \right) = 2\cdot \mathrm{Tr} \left( \Delta^T H\right) + \mathrm{Tr} \left(\Delta^T W \Delta \right)$$
    This proves the lemma.
\end{proof}

\begin{proof}[Proof of the theorem]
    Assume that $(W,B,\cl)$ operates in serial mode. Suppose that $X\in \mathbb{M}_{n\times d}(\R)$ is the matrix of neuron states of $(W,B,\cl)$ at some time $t \in \NN$. Suppose that the states of the neurons of $(W,B,\cl)$ at time $t+1$ are given by the rows of $X + \Delta$, where $\Delta \in \mathbb{M}_{n\times d}(\R)$. According to Lemma \ref{lemma:delta_of_standard_energy_function} we have
    $$V\left(X + \Delta\right) - V\left(X\right) = -2\cdot \mathrm{Tr} \left( \Delta^T H \right) - \mathrm{Tr} \left(\Delta^T W \Delta \right)$$
    where $H = WX + B$. Since the network is operating in serial mode, there is a unique neuron say $k$ which is updated at timestamp $t$. Let $j_1,j_2 \in \{1,...,d\}$ be such that
    $$j_1 = \argmax_{j}H_{kj} = \argmax_{j}\bigg\{\sum_{i=1}^nW_{ki}X_{ij} + B_{kj}\bigg\},\,X_{kj_2} = 1$$
    If $j_1 = j_2$, then $\Delta = 0$. Hence we may assume that $j_1\neq j_2$. Then
    $$\Delta_{kj_1}  = 1,\,\Delta_{kj_2} = -1$$
    and these are the only nonzero entries of $\Delta$. Taking this into consideration we have
    $$-2\cdot \mathrm{Tr} \left( \Delta^T H\right) - \mathrm{Tr} \left(\Delta^T W \Delta \right) = -2\cdot \left(H_{kj_1} - H_{kj_2}\right) - W_{j_1j_1} - W_{j_2j_2}$$
    Since $W$ has nonnegative entries on the diagonal, we derive that
    $$-2\cdot \left(H(X)_{kj_1} - H(X)_{kj_2}\right) - W_{j_1j_1} - W_{j_2j_2} \leq -2\cdot \left(H_{kj_1} - H_{kj_2}\right)$$
    Using the facts that
    $$j_1 = \argmax_{j}H_{kj},\,X_{kj_2} = 1,\,j_1\neq j_2, $$
    we obtain $H_{kj_1} - H_{kj_2} > 0$. Thus, in summary, we proved that
    $$V\left(X + \Delta\right) - V\left(X\right) \leq 0$$
    and the equality holds if and only if states of neurons does not change during update at timestamp $t$. 
\end{proof}

\convergencethm*
\begin{proof}
    The assertion \textbf{(1)} is an immediate consequence of the existence of an energy function - which was proved in Theorem \ref{theorem:energy_for_DHN_with_cl_activation}.\\
    The proof of \textbf{(2)} relies on the same idea as the proof of the corresponding statement in \cite{bruck:1990}. Suppose that $(W,B,\cl)$ runs in a parallel mode with sequence of neurons states $\{X(t)\}_{t\in \NN}$. Consider a $d$-dimensional Hopfield network $(\hat{W}, \hat{B}, \cl)$ with $2n$ neurons, where
    $$\hat{W} = \begin{pmatrix} 0 & W \\ W & 0 \end{pmatrix} \quad \hat{B} = \begin{pmatrix} B \\ B \end{pmatrix}$$
    The network $(\hat{W}, \hat{B}, \cl)$ is bipartite, with partitions
    $$\{1,2,...,n\},\,\{1 + n,2 + n,...,n + n\}$$
    Further, it's connection satisfies the following:
    \begin{itemize}
        \item Neuron $i$ in the first partition is connected to neuron $j + n$ in the second partition by directed edge of weight $W_{ij}$.
        \item  Neuron $j + n$ in the second partition is connected to neuron $i$ in the first partition by directed edge of weight $W_{ji}$, which is also equal to $W_{ij}$.
        \item Neuron $i$ in the first partition has bias vector $B_i$.
        \item Neuron $j + n$ in the second partition has bias vector $B_j$.
    \end{itemize}
    We show that there is a DHN with connections and biases as above, which in a serial mode is equivalent to $(W, B, \cl)$. To do so, we first provide an initial state for the network, followed by a sequence of serial mode operations, and we prove that the state of $(W, B, \cl)$ can be deduced from the state of the extended network. First, we set the state of the network at $t = 0$. We set:
    $$\hat{X}_{i + n}(0) = X_i(0)$$
    for every $i\in \{1,...,n\}$ and we set $\hat{X}_i(0)$ to be an arbitrary vector in $\R^d$ for every $i\in \{1,...,n\}$. Then, we update the nodes cyclically according to the following sequence.
    $$1,2...,n,1 + n,2 + n,...,n + n$$ 
    Using the architecture of $(\hat{W}, \hat{B}, F)$, one can prove by mathematical induction that
    $$\hat{X}_i\left((2t + 1)\cdot n\right) = X_i(2t + 1),\,\hat{X}_{i + n}\left(2t\cdot n\right) = X_i(2t)$$
    for each $i \in \{1,...,n\}$ and $t \in \NN$. Since $\hat{W}$ is symmetric with zero diagonal, we use \textbf{(1)} to derive that $\{\hat{X}(t)\}_{t\in \NN}$ is in a stable state for all sufficiently large times $t$. Let $\hat{X} \in \mathbb{M}_{2n\times d}(\R)$ be this stable state. Then 
    $$X_i(2t) = \hat{X}_{i + n} = X_{i}(2t + 2)$$
    and
    $$X_{i}(2t + 1) = \hat{X}_i = X_{i}(2t + 3)$$
    for each $i \in \{1,...,n\}$ and for all sufficiently large times $t$. Thus $\{X(t)\}_{t\in \NN}$ converges to a cycle of length $2$.        
\end{proof}

\subsection{Greedy graph clustering methods}\label{appendix:greedy_graph_clustering}
\noindent
The work of \cite{bruck:1990} provides an excellent starting point to understand the connection between Hopfield networks and graph cuts encoded by their states -- running the Hopfield network yields cuts with smaller cut-values over iterations. In this section we present a generalisation of this result to multidimensional Hopfield networks with classification function activation. In particular, we prove that DHNs optimise graph clusterings in an analogous way to how Hopfield networks optimise graph cuts. \\

\begin{definition}
    Let $n,d$ be positive integers and let $G$ be a graph with set of nodes $\{1,...,n\}$. \textit{A $d$-clustering of $G$} is a partitioning of $\{1,...,n\}$ into $d$ disjoint subsets. The subsets are referred to as \textit{clusters}.
\end{definition}
\noindent
Next we define the measure of quality of graph clusterings. 

\begin{definition}
    Let $n,d$ be positive integers and let $G$ be a weighted graph with set of nodes $\{1,...,n\}$ nodes and edge weight matrix $W \in \mathbb{M}_{n\times n}(\R)$. Let $\{c_1,...,c_d\}$ be a $d$-clustering of $G$. Then
    $$C_G\left(c_1,...,c_d\right) = \sum_{k \neq l} \sum_{i \in c_k}\sum_{j\in c_l} W_{ij}$$
    is \textit{the $d$-cut value of $\{c_1,...,c_d\}$ with respect to $W$}.
\end{definition}
\noindent
In order to relate graph clusterings with multidimensional Hopfield networks with $\cl$ activation we introduce the following special class of matrices.

\begin{definition}
Let $n,d$ be positive integers. A matrix in $\mathbb{M}_{n\times d}(\R)$ with rows in $\mathbb{L}_d$ is a \textit{clustering matrix}. The set of all clustering matrices in $\mathbb{M}_{n\times d}(\R)$ is denoted by $\Clus_{n\times d}(\R)$. 
\end{definition}

\begin{remark}\label{remark:clustering_matricies_is_range_of_classification}
According to Remark \ref{remark:range_of_activation_function} we have $\ran(\cl) = \Clus_{n\times d}(\R)$.
\end{remark}
\noindent
As the name suggests, every clustering matrix matrix $\Clus_{n\times d}(\R)$ encodes a clustering of $\{1,...,n\}$. Indeed, if $X$ is a clustering matrix with $n$ rows and $d$ columns, then
$$c_k = \big\{i\in \{1,...,n\}\,\big|\,X_{ik} = 1\big\}$$
for $k \in \{1,...,d\}$ is a clustering
\begin{remark}\label{remark:nonuniqueness_of_clustering_matrix}
    Note that there are multiple clustering matrices in $\Clus_{n\times d}(\R)$ associated with the same clustering of $\{1,...,n\}$. In fact, if $X \in \Clus_{n\times d}(\R)$ and $Y \in \mathbb{M}_{d\times d}(\R)$ is an arbitrary permutation matrix, then the matrix $XY$ corresponds to permuting the labels of the clusters defined by $X$, thus $X$ and $XY$ encode the same clustering.
\end{remark}
\noindent
We can express the d-cut value of a clustering in terms of an associated clustering matrix.

\begin{fact}\label{fact:min_cut_quadratic_form}
    Let $G$ be a graph with set of nodes $\{1,...,n\}$ and edge weight matrix $W \in \mathbb{M}_{n\times n}(\R)$. Let $d$ be a positive integer and let $\{c_1,...,c_d\}$ be a $d$-clustering of $G$. If $X \in \Clus_{n\times d}(\R)$ is a clustering matrix encoding $\{c_1,...,c_d\}$, then
    $$C_G\left(c_1,...,c_d\right) = \mathrm{Vol}(G) - \mathrm{Tr}\left(X^TWX\right)$$
    where 
    $$\mathrm{Vol}(G) = \sum_{i,j=1}^nW_{ij}$$
\end{fact}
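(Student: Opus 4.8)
The plan is to unwind $\mathrm{Tr}\left(X^TWX\right)$ entry by entry, recognise it as the total weight sitting \emph{inside} the clusters, and then use that the clusters form a partition to split $\mathrm{Vol}(G)$ into a within-cluster part and a between-cluster part, the latter being exactly $C_G(c_1,\dots,c_d)$.

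First I would record the defining property of the clustering matrix: since $X\in\Clus_{n\times d}(\R)$ encodes $\{c_1,\dots,c_d\}$, we have $X_{ik}=1$ when $i\in c_k$ and $X_{ik}=0$ otherwise, for all $i\in\{1,\dots,n\}$ and $k\in\{1,\dots,d\}$. Consequently, for any $k,l\in\{1,\dots,d\}$,
$$\left(X^TWX\right)_{kl}=\sum_{i=1}^n\sum_{j=1}^n X_{ik}\,W_{ij}\,X_{jl}=\sum_{i\in c_k}\sum_{j\in c_l} W_{ij},$$
because $X_{ik}X_{jl}$ equals $1$ precisely when $i\in c_k$ and $j\in c_l$. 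Setting $l=k$ and summing over $k$ gives
$$\mathrm{Tr}\left(X^TWX\right)=\sum_{k=1}^d\sum_{i\in c_k}\sum_{j\in c_k}W_{ij},$$
that is, the sum of $W_{ij}$ over all ordered pairs $(i,j)$ lying in a common cluster.

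Finally, because $\{c_1,\dots,c_d\}$ partitions $\{1,\dots,n\}$, every ordered pair $(i,j)\in\{1,\dots,n\}^2$ either has both coordinates in the same cluster or has them in two distinct clusters, and these cases are exhaustive and mutually exclusive. Hence
$$\mathrm{Vol}(G)=\sum_{i,j=1}^n W_{ij}=\sum_{k=1}^d\sum_{i\in c_k}\sum_{j\in c_k}W_{ij}+\sum_{k\neq l}\sum_{i\in c_k}\sum_{j\in c_l}W_{ij}=\mathrm{Tr}\left(X^TWX\right)+C_G(c_1,\dots,c_d),$$
and rearranging yields the claim. There is no genuine obstacle here; the only point requiring care is the bookkeeping of ordered pairs, but since $C_G$, $\mathrm{Vol}(G)$ and the trace are all defined as sums over ordered pairs (diagonal terms included), the decomposition is consistent and no spurious factor of $2$ appears. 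Note also that the identity uses neither symmetry of $W$ nor any condition on its diagonal.
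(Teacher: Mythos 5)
Your proof is correct and follows essentially the same route as the paper: identify $\mathrm{Tr}(X^TWX)$ with the total within-cluster weight and then split $\mathrm{Vol}(G)$ into within-cluster and between-cluster sums over ordered pairs. The only difference is that you verify the trace identity entrywise, which the paper simply asserts.
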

\begin{proof}
    Note that
    $$\mathrm{Tr}\left(X^TWX\right) = \sum_{k=1}^d\sum_{i,j\in c_k}W_{ij}$$
    and since
    $$\sum_{k=1}^d\sum_{i,j\in c_k}W_{ij} = \left(\sum_{k=1}^d\sum_{i,j\in c_k}W_{ij} + \sum_{k\neq l}\sum_{i\in c_k}\sum_{j\in c_l}W_{ij}\right)  - \sum_{k\neq l}\sum_{i\in c_k}\sum_{j\in V_l}W_{ij} = $$
    $$= \underbrace{\sum_{i=1}^n\sum_{j=1}^nW_{ij}}_{\mathrm{Vol}(G)} - \underbrace{\sum_{k\neq l}\sum_{i\in c_k}\sum_{j\in c_l}W_{ij}}_{C_G\left(c_1,...,c_d\right)}$$
    we derive that
    $$C_G\left(c_1,...,c_d\right) = \mathrm{Vol}(G) - \mathrm{Tr}\left(X^TWX\right)$$
\end{proof}

\noindent
Using this observation and results from preceeding sections, we present the following results.

\begin{corollary}\label{corollary:DHN_with_cl_and_zero_bias_solves_mincut_locally}
    Let $G$ be a graph with set of nodes $\{1,...,n\}$ and edge weight matrix $W \in \mathbb{M}_{n\times n}(\R)$. Suppose that $W$ has nonnegative entries on the diagonal. Then the $d$-dimensional Hopfield network $(W,0,\cl)$ with $n$ neurons operating in a serial mode decreases the $d$-cut value for the $d$-clustering of $G$ encoded by its matrix of neuron states at each step.
\end{corollary}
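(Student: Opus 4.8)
The plan is to reduce the statement to the energy-function bound already established in Theorem \ref{theorem:energy_for_DHN_with_cl_activation}. First I would observe that, by Remark \ref{remark:clustering_matricies_is_range_of_classification} together with Remark \ref{remark:range_of_activation_function}, the matrix of neuron states $X(t)$ lies in $\Clus_{n\times d}(\R)$ for every $t \in \NN$ (for $t=0$ we simply take the initialisation to be a clustering matrix, which is harmless by Remark \ref{remark:range_of_activation_function}); hence at each time step $X(t)$ genuinely encodes a $d$-clustering $\{c_1(t),\dots,c_d(t)\}$ of $G$, so the quantity whose monotonicity we must control is well defined.

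Next I would apply Fact \ref{fact:min_cut_quadratic_form} to the clustering matrix $X(t)$ to rewrite the $d$-cut value of the encoded clustering as
$$C_G\big(c_1(t),\dots,c_d(t)\big) = \mathrm{Vol}(G) - \mathrm{Tr}\big(X(t)^T W X(t)\big).$$
Since $\mathrm{Vol}(G)$ does not depend on $t$, tracking the $d$-cut value is the same as tracking $-\mathrm{Tr}(X(t)^T W X(t))$. But this is exactly the energy function of Theorem \ref{theorem:energy_for_DHN_with_cl_activation} specialised to bias $B = 0$: namely $V(X) = -\mathrm{Tr}(X^T W X + 2 X^T \bd{0}) = -\mathrm{Tr}(X^T W X)$, so that $C_G\big(c_1(t),\dots,c_d(t)\big) = \mathrm{Vol}(G) + V(X(t))$.

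Finally, since $W$ is symmetric with nonnegative entries on the diagonal, Theorem \ref{theorem:energy_for_DHN_with_cl_activation} applies to $(W,0,\cl)$ in serial mode and gives $V(X(t+1)) \le V(X(t))$ for every $t$, with equality exactly when the updated neuron does not change state (by the concluding line of that theorem's proof). Adding the constant $\mathrm{Vol}(G)$ to both sides yields $C_G\big(c_1(t+1),\dots,c_d(t+1)\big) \le C_G\big(c_1(t),\dots,c_d(t)\big)$, i.e. the $d$-cut value does not increase along a serial run, and strictly decreases whenever the encoded clustering actually changes. This is the assertion of the corollary.

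I do not expect a genuine obstacle here: the substance is entirely in Theorem \ref{theorem:energy_for_DHN_with_cl_activation} and Fact \ref{fact:min_cut_quadratic_form}, and the only points requiring a word of care are that symmetry of $W$ is what licenses the invocation of the theorem, while nonnegativity of the diagonal is precisely what keeps the $-W_{j_1 j_1} - W_{j_2 j_2}$ contribution to the energy difference from spoiling the sign. It is also worth noting in passing that $\mathrm{Tr}(X^T W X)$ is invariant under right multiplication of $X$ by a permutation matrix, so the conclusion does not depend on which clustering matrix in $\Clus_{n\times d}(\R)$ represents a given clustering, consistently with Remark \ref{remark:nonuniqueness_of_clustering_matrix}.
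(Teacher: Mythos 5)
Your proposal is correct and follows exactly the route the paper takes: the paper's proof is a one-line appeal to Fact \ref{fact:min_cut_quadratic_form} and Theorem \ref{theorem:energy_for_DHN_with_cl_activation}, and your write-up simply fills in the same reduction (the $B=0$ energy equals $-\mathrm{Tr}(X^TWX)$, which differs from the $d$-cut value by the constant $\mathrm{Vol}(G)$). The extra remarks about symmetry of $W$ (implicit since $W$ is the edge weight matrix of a graph) and permutation-invariance are accurate but not needed beyond what the paper already assumes.
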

\begin{proof}
    The proof follows immediately from Fact \ref{fact:min_cut_quadratic_form} and Theorem \ref{theorem:energy_for_DHN_with_cl_activation}. \\
\end{proof}

\noindent
For the case when $B$ is nonzero, we first need to introduce some notation. Consider a weighted graph with nodes set $\{1,...,n+d\}$ and edge weight matrix in block form 
$$\begin{pmatrix} 
W & B \\ 
B^T & U
\end{pmatrix}$$
where $W \in \mathbb{M}_{n\times n}(\R),B \in \mathbb{M}_{n\times d}(\R),U\in \mathbb{M}_{d\times d}(\R)$ and $W, U$ are symmetric. We denote the graph obtained this way by $G\left(W,B,U\right)$. Next let $X \in \Clus_{n\times d}(\R)$ be a clustering matrix. Then the matrix written in the block form 
$$\begin{pmatrix} 
    X \\ 
    I_d \\
\end{pmatrix}$$
is called the canonical extension of $X$. Note that the canonical extension of $X$ is a clustering matrix in $\Clus_{(n + d)\times d}(\R)$. In particular, if $X \in \Clus_{n\times d}(\R)$, then its canonical extension encodes a $d$-clustering of $G(W,B,U)$.

\begin{theorem}\label{theorem:energy_clustering_optimality_equivalence}
    Let $(W, B, \cl)$ be a $d$-dimensional Hopfield network with $n$ neurons and let $U \in \mathbb{M}_{d\times d}(\R)$ be a symmetric matrix. Suppose that $W$ has nonnegative entries on the diagonal. Then the following assertions hold.
    \begin{enumerate}[label=\emph{\textbf{(\arabic*)}}, leftmargin=*]
        \item If $(W, B,\cl)$ operates in a serial mode, then the $d$-clusterings of $G(W,B,U)$ encoded by the canonical extensions of successive matrices of neuron states of $(W,B,\cl)$ have decreasing $d$-cut values.
        \item A state $X$ of $(W, B, \cl)$ is stable for every serial mode operation if and only if the $d$-cut value of the $d$-cut encoded by the canonical extension of $X$ in $G(W, B, U)$ cannot be decreased by moving any of the first $n$ nodes of $G(W, B, U)$ to another cluster.
        \item If
        $$U_{ij} = \left\{ \begin{array}{ll} -\kappa & \text{if } i \neq j \\ 0 & \text{otherwise} \end{array} \right.$$ 
        for sufficiently large $\kappa \in \R$, then there exists a state $X$ of $(W, B, \cl)$ which is stable for every serial mode operation, and the $d$-clustering of $G(W,B,U)$ encoded by the canonical extension of $X$ has globally minimal $d$-cut value.
    \end{enumerate}
\end{theorem}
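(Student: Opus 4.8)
The plan is to reduce all three assertions to facts about the larger graph $G(W,B,U)$ and the associated Hopfield network on its clustering matrices, exploiting the correspondence between neuron-state updates and cluster-reassignments established by Fact \ref{fact:min_cut_quadratic_form} together with Theorem \ref{theorem:energy_for_DHN_with_cl_activation}. First I would make the bookkeeping precise: writing $\hat W = \begin{pmatrix} W & B \\ B^T & U\end{pmatrix}$ and, for $X \in \Clus_{n\times d}(\R)$, writing $\hat X = \begin{pmatrix} X \\ I_d\end{pmatrix}$ for its canonical extension, a direct block computation gives
$$\mathrm{Tr}\left(\hat X^T \hat W \hat X\right) = \mathrm{Tr}\left(X^T W X\right) + 2\,\mathrm{Tr}\left(X^T B\right) + \mathrm{Tr}(U),$$
so by Fact \ref{fact:min_cut_quadratic_form} the $d$-cut value of the canonical extension of $X$ in $G(W,B,U)$ equals $\mathrm{Vol}(G(W,B,U)) - \mathrm{Tr}(U) + V(X)$, where $V$ is the energy function of Theorem \ref{theorem:energy_for_DHN_with_cl_activation}. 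Thus minimising the $d$-cut value over canonical extensions is \emph{exactly} minimising $V$ over $\Clus_{n\times d}(\R)$, which is the content I will use repeatedly.

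For assertion \textbf{(1)}, a serial-mode update of $(W,B,\cl)$ changes $X$ to some $X+\Delta$ with $V(X+\Delta) \le V(X)$ by Theorem \ref{theorem:energy_for_DHN_with_cl_activation}; by the identity above this is precisely the statement that the $d$-cut value of the canonical extension decreases (weakly). For assertion \textbf{(2)}, I would observe that updating neuron $k$ of $(W,B,\cl)$ corresponds, via the correspondence between the update rule and $H_{kj} = \sum_i W_{ki}X_{ij} + B_{kj}$, to moving node $k$ of $G(W,B,U)$ into the cluster $c_j$ that minimises the resulting $d$-cut value — here it matters that the last $d$ nodes are pinned to distinct clusters by $I_d$, so that node $k$'s contribution to the cut only depends on $H_k$, and moving it to cluster $j_1 = \argmax_j H_{kj}$ is exactly the cut-minimising move. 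Hence $X$ is fixed by every serial update iff no single-node move among the first $n$ nodes strictly decreases the cut; the "only if" direction uses the computation in the proof of Theorem \ref{theorem:energy_for_DHN_with_cl_activation} that $H_{kj_1} - H_{kj_2} > 0$ whenever a genuine change occurs, so a non-stable state does admit an improving move.

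For assertion \textbf{(3)}, the idea is that taking $U_{ij} = -\kappa$ for $i\ne j$ penalises any two of the last $d$ nodes sharing a cluster, so once $\kappa$ is large enough, every globally cut-minimal $d$-clustering of $G(W,B,U)$ must place the last $d$ nodes in $d$ distinct clusters — i.e. it is (up to relabelling, cf. Remark \ref{remark:nonuniqueness_of_clustering_matrix}) the canonical extension of some $X \in \Clus_{n\times d}(\R)$. I would quantify this: if a clustering puts two of the last nodes together, its cut value exceeds $2\kappa - (\text{something bounded in terms of } \lVert W\rVert, \lVert B\rVert, n, d)$, whereas any canonical extension has cut value bounded independently of $\kappa$; so for $\kappa$ past that explicit threshold the global minimiser has the canonical form. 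Such a global minimiser $X$ then minimises $V$ over $\Clus_{n\times d}(\R)$, hence is stable for every serial update by assertion \textbf{(2)} (a strictly improving single-node move would contradict global optimality), and its canonical extension has globally minimal $d$-cut value by construction. The main obstacle is the last point: I need to argue carefully that permuting cluster labels on the pinned block does not help — that is, that among canonical-form clusterings the one minimising $V$ over $\Clus_{n\times d}(\R)$ really does realise the global minimum over \emph{all} clusterings of $G(W,B,U)$, and that the threshold on $\kappa$ can be made uniform; this is a finite but slightly fiddly estimate, and getting the bookkeeping on cluster relabellings right (so that "the canonical extension of $X$" and "an arbitrary cut-minimal clustering" can be identified) is where the care is needed.
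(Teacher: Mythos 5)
Your proposal follows essentially the same route as the paper: the identity $C_G(\hat X)=\mathrm{Vol}(G(W,B,U))-\mathrm{Tr}(U)+V(X)$ reduces \textbf{(1)} and \textbf{(2)} to the energy function of Theorem \ref{theorem:energy_for_DHN_with_cl_activation}, and for \textbf{(3)} the large off-diagonal penalty $\kappa$ forces the last $d$ nodes of any globally cut-minimal clustering into distinct clusters, after which relabelling (Remark \ref{remark:nonuniqueness_of_clustering_matrix}) puts it in canonical form and \textbf{(2)} gives stability. The "fiddly" relabelling step you flag is handled in the paper exactly as you anticipate, by right-multiplying the encoding matrix by the inverse of its permutation block, so your outline is correct.
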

\begin{proof}
    For brevity we denote $G(W,B,U)$ by $G$. Let $V$ be the energy function of $d$-dimensional Hopfield network $(W,B,\cl)$ described in Theorem \ref{theorem:energy_for_DHN_with_cl_activation}. Pick a matrix $X \in \Clus_{n\times d}(\R)$ and let $\{c_1,...,c_d\}$ be a clustering of $G$ encoded by the canonical extension of $X$. Then by Fact \ref{fact:min_cut_quadratic_form} we have
    $$C_{G}(c_1,...,c_d) = \mathrm{Vol}\left(G\right) - \mathrm{Tr}\left(
        \begin{pmatrix} 
        X^T& I_d^T 
        \end{pmatrix}\cdot
        \begin{pmatrix} 
            W & B \\ 
            B^T & U
        \end{pmatrix}\cdot
        \begin{pmatrix}
            X \\ 
            I_d  
        \end{pmatrix}
    \right) = $$
    $$= \mathrm{Vol}\left(G\right) - \mathrm{Tr}(U) + V(X)$$
    From this observation we can immediately infer \textbf{(1)} and \textbf{(2)}.\\
    In order to prove \textbf{(3)} consider $m,M\in \R$ such that
    $$m \leq - \mathrm{Tr}\left(X^TWX + 2X^TBY\right) \leq M$$
    for all $X \in \Clus_{n\times d}(\R)$ and $Y \in \Clus_{d\times d}(\R)$. Fix $\kappa \in \R$ satisfying
    $$m + \kappa > M$$
    and we assume that $U$ is of the special form as indicated in \textbf{(3)}. Let $c_1,...,c_d$ be a clustering of $G$ with minimal $d$-cut value. Let $X \in \Clus_{n\times d}(\R)$ and $Y \in \Clus_{d\times d}(\R)$ be matrices such that
    $$\begin{pmatrix} 
        X \\ 
        Y 
    \end{pmatrix} \in \Clus_{(n + d)\times d}(\R)$$
    encodes $\{c_1,...,c_d\}$. According to Fact \ref{fact:min_cut_quadratic_form} we have
    $$C_G(c_1,...,c_d) = \mathrm{Vol}\left(G\right) - \mathrm{Tr}\left(
        \begin{pmatrix} 
        X^T& Y^T 
        \end{pmatrix}\cdot
        \begin{pmatrix} 
            W & B \\ 
            B^T & U
        \end{pmatrix}\cdot
        \begin{pmatrix}
            X \\ 
            Y  
        \end{pmatrix}
    \right) = $$ 
    $$= \mathrm{Vol}\left(G\right) - \mathrm{Tr}\left(
        X^TWX + Y^TB^TX + X^TBY + Y^TVY
    \right)=$$
    $$= \mathrm{Vol}\left(G\right) - \mathrm{Tr}\left(X^TWX\right) - 2\mathrm{Tr}\left(X^TBY\right) - \mathrm{Tr}\left(Y^TUY\right) = $$
    $$= \mathrm{Vol}\left(G\right) - \mathrm{Tr}\left(X^TWX + 2X^TBY\right) + \kappa\cdot \sum_{k=1}^d\sum_{i \neq j}Y_{ik}Y_{jk}$$
    Now if $Y_{ik} = 1 = Y_{jk}$ for some $j\neq i$ and $k \in \{1,...,d\}$, then 
    $$C_G(c_1,...,c_d) \geq \mathrm{Vol}\left(G\right) + m + \kappa > \mathrm{Vol}\left(G\right) + M$$
    This contradicts the fact that $C_G$ achieves it's global minimum for $\{c_1,...,c_d\}$. Hence $Y$ is a permutation matrix and thus $Y^{-1} = Y^T$ is also a permutation matrix. Since 
    $$\begin{pmatrix}
        X \\ 
        Y  
    \end{pmatrix} \cdot Y^{-1} = \begin{pmatrix}
        XY^{-1} \\ 
        I_d  
    \end{pmatrix}$$
    and according to Remark \ref{remark:nonuniqueness_of_clustering_matrix}, we derive that the canonical extension of $XY^{-1}$ encodes $\{c_1,...,c_d\}$. By \textbf{(2)} we infer that $XY^{-1}$ is a stable state for $(W,B,\cl)$ for every serial mode of operation, and this completes the proof of \textbf{(3)}. 
\end{proof}

\subsection{Details on Louvain and Newman's methods}\label{appendix:details_on_louvain_and_newman}
\noindent
Intuitively, the modularity value associated with an edge measures the difference between the weight of an edge compared to a null-hypothesis obtained by a probabilistic approximation of the underlying graph \cite{newman:2004}. The null hypothesis is usually obtained using the configuration model. Given a graph, each edge is split in half, then each half edge (often referred to as stub) is rewired randomly with any other stub in the network. Thus, the baseline weight for a pair of nodes is given by the expected number of edges between the pair of nodes in a random rewiring of the stubs. A major advantage of approximating a graph in this way, is that the degrees of the nodes, and hence the degree distribution of the graph is preserved. \\
If an edge has a higher weight than the one expected based on the null-hypothesis, it is assumed to indicate an important connection (and a positive modularity value is assigned), whereas if the weight is smaller than the expected, it indicates smaller than expected connection between the two nodes (and a negative modularity value is assigned). More exactly:

\begin{definition}\label{definition:modularity_matrix}
    Let $n$ be a positive integer and let $G$ be a graph with set of nodes $\{1,...,n\}$ and edge weight matrix $W \in \mathbb{M}_{n\times n}(\R)$. Consider the matrix $Q \in \mathbb{M}_{n\times n}(\R)$ given by
    $$Q_{ij} =\frac{1}{\mathrm{Vol}(G)} \left( W_{ij} - \frac{k_i k_j}{\mathrm{Vol}(G)} \right)$$
    where $\mathrm{Vol}(G) = \sum_{i=1}^{n}\sum_{j=1}^{n} W_{ij}$ and $k_i = \sum_{j=1}^{n} W_{ij}$. Then $Q$ is \textit{a modularity matrix of $G$}.
\end{definition}
\noindent
This way, we can define the modularity of a clustering of $G$, as the sum of the intra-cluster modularities:

\begin{definition}\label{definition:modularity}
    Let $n$ be a positive integer and let $G$ be a graph with set of nodes $\{1,...,n\}$ and edge weight matrix $W \in \mathbb{M}_{n\times n}(\R)$. The \textit{modularity of a $d$-clustering $\{c_1,...,c_d\}$ of $G$} is defined as
    $$Q\left(c_1,...,c_d\right) = \sum_{k = 1}^d \sum_{i,j \in c_k}Q_{ij}$$
\end{definition}

\begin{remark}
    With the notation as in the definition above we have
    $$Q\left(c_1,...,c_d\right) = \sum_{k = 1}^d \sum_{i,j \in c_k}Q_{ij} = \sum_{i,j=1}^nQ_{ij} - \sum_{k\neq l}\sum_{i\in c_k}\sum_{j\in c_l}Q_{ij}$$
    and hence maximizing modularity of $G$ is the same as finding the minimal $d$-cut value of a graph which has the same set of vertices as $G$, but has edge weight matrix $Q$.
\end{remark}
\noindent
The Louvain method \cite{blondel:2008} is a popular graph clustering method, based around the heuristic idea of greedy local search, using the modularity of the clustering as an objective function to maximise. Strictly speaking, after reaching a local optimum, the Louvain method merges nodes in the same cluster together, but this part of the algorithm is irrelevant for our analysis, thus in the sequel we ignore it, and refer to Listing \ref{listing:louvain_method} as the Louvain method.
\begin{lstlisting}[numbers=left,caption={\label{listing:louvain_method}\textbf{Louvain method}},belowskip=5mm]
    initialise the clustering $\{c_i\}_{i=1}^{d}$(*\footnote{In practice this initialisation can take many forms. See for example Remark \ref{remark:louvain_initialisation}.}*)
    pick $u \in 1, \ldots, n$
        $m^* \leftarrow \text{argmax}_m \> Q(c_1 \setminus \{u\}, \ldots, c_{m-1} \setminus \{u\}, c_{m} \cup \{ u \}, c_{m + 1} \setminus \{ u \}, \ldots, c_d \setminus \{ u \})$(*\footnote{In original version of the algorithm only clusters with nonzero connection to $u$ are taken in argmax (neighboring clusters). This is equivalent to our version under the additional assumption that all nodes in the graph have nonnegative degrees. In \cite{blondel:2008} all weights are assumed to be nonnegative.}*)
        move node $u$ to cluster $c_{m^*}$
\end{lstlisting}

\begin{remark}\label{remark:louvain_initialisation}
    In \cite{blondel:2008} the initialisation takes the form $\{c_i\}_{i=1}^{n}$ with $c_i=\{ i \}$.
\end{remark}
\noindent
The steps $(2-4)$ resemble the update rule of a DHN with $\cl$ activation very closely. In fact, one can think of the Louvain method as a direct analogue of a DHN with classification function, as the following rigorous restatement of Proposition \ref{proposition:louvain_hopfield_net_equivalence} shows.

\begin{proposition}\label{proposition:louvain_hopfieldnet_equivalence}
    Let $G$ be a graph with nodes $\{1,...,n\}$, edge weight matrix $W \in \mathbb{M}_{n\times n}(\R)$ and modularity matrix $Q$. Suppose that $\tilde{Q}$ is the same matrix as $Q$ but with zeros on the diagonal. Consider an $n$-dimensional Hopfield network $(\tilde{Q}, 0, \cl)$ with $n$ neurons and pick $u \in \{1,...,n\}$. If $X$ is a clustering matrix, then running a single serial update of $(\tilde{Q},0,\cl)$ for neuron $u$ and matrix of neuron states $X$ produces the same result as updating clustering encoded by $X$ by Louvain method with respect to node $u$.
\end{proposition}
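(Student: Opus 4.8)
The plan is to show that both the Louvain update and the DHN serial update for neuron $u$ select the same target cluster index, and hence produce the same clustering matrix (up to the label-permutation ambiguity noted in Remark \ref{remark:nonuniqueness_of_clustering_matrix}, which is irrelevant at the level of clusterings). Write $X \in \Clus_{n\times n}(\R)$ for the current matrix of neuron states, encoding a clustering $\{c_1,\dots,c_n\}$ via $c_k = \{i : X_{ik} = 1\}$. The DHN serial update for $u$ replaces row $X_u$ by $\cl\big(\sum_{j} \tilde Q_{uj} X_j\big)$, i.e. it sets $X_u$ to the indicator of $\argmax_m \sum_{j} \tilde Q_{uj} X_{jm} = \argmax_m \sum_{j \in c_m} \tilde Q_{uj}$. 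So I need to show this is the same index $m^*$ that the Louvain method picks.

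First I would compute the Louvain objective. Using the expansion in the Remark after Definition \ref{definition:modularity}, or directly, the modularity of the clustering obtained from $\{c_1,\dots,c_n\}$ by moving $u$ into cluster $c_m$ (and removing it from its current cluster) is $\sum_{k} \sum_{i,j \in c_k'} Q_{ij}$ where $c_k'$ is the modified clustering. The only terms that depend on $m$ are those involving $u$: the diagonal term $Q_{uu}$ always appears (it is in whatever cluster contains $u$), and the cross terms $\sum_{j \in c_m \setminus \{u\}} (Q_{uj} + Q_{ju}) = 2\sum_{j \in c_m, j \ne u} Q_{uj}$, using symmetry of $Q$. Hence $\argmax_m Q(\dots, c_m \cup \{u\}, \dots) = \argmax_m \sum_{j \in c_m, j \ne u} Q_{uj}$. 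Since $\tilde Q$ agrees with $Q$ off the diagonal, $\sum_{j \in c_m, j \ne u} Q_{uj} = \sum_{j \in c_m} \tilde Q_{uj}$ (the $j = u$ term vanishes because $\tilde Q_{uu} = 0$). Therefore the two $\argmax$ expressions coincide, so $m^* = \argmax_m \sum_{j \in c_m} \tilde Q_{uj}$ is exactly the cluster index chosen by the DHN update.

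Finally I would close the loop: moving $u$ to cluster $c_{m^*}$ in the Louvain sense is precisely the operation of setting row $u$ of $X$ to the standard basis vector $\delta_{m^*}$, i.e. the result of the serial $\cl$-update; and both operations leave all other rows unchanged. So the resulting clustering matrices are identical whenever the $\argmax$ is unique, and encode the same clustering in general. The main obstacle, though a mild one, is bookkeeping: one must be careful that the Louvain objective in Listing \ref{listing:louvain_method} removes $u$ from \emph{all} other clusters before inserting it into $c_m$, so that the comparison across $m$ is apples-to-apples, and that the symmetry of $Q$ (inherited from symmetry of $W$) is what makes the factor-of-two cross terms collapse into a single $\argmax$-relevant quantity; handling ties via Remark \ref{remark:nonuniqueness_of_clustering_matrix} is the only other subtlety.
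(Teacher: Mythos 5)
Your proposal is correct and follows essentially the same route as the paper: expand the Louvain modularity objective, note that only the cross term $2\sum_{j\in c_m\setminus\{u\}}Q_{uj}$ depends on $m$ (using symmetry of $Q$), identify it with $\sum_{j}\tilde Q_{uj}X_{jm}$ via the zero diagonal of $\tilde Q$, and conclude that both updates pick the same $\argmax$ index. No gaps; your explicit remarks on ties and on removing $u$ from all clusters before comparison are just careful bookkeeping the paper leaves implicit.
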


\begin{figure}[H]
    \centering
    \begin{tikzcd}[column sep=huge, row sep=normal]
        c \arrow[r, "\text{Louvain update}"]  & c'  \\
        X \arrow[r, "\text{DHN update}"] \arrow[u, dashed] & X' \arrow[u, dashed]
    \end{tikzcd}
    \caption{\label{diagram:louvain_equivalent_to_mhn}\textbf{Figure illustrating the claim of Proposition \ref{proposition:louvain_hopfieldnet_equivalence}}.}
\end{figure}

\begin{proof}
    Let $X \in \Clus_{n\times d}(\R)$ be a clustering matrix encoding clustering $\{c_1,...,c_d\}$ of $G$ and fix $u \in \{1,...,d\}$. Then
    $$Q\left(c_1 \setminus \{u\}, \ldots, c_{m-1} \setminus \{u\}, c_{m} \cup \{ u \}, c_{m + 1} \setminus \{ u \}, \ldots, c_d \setminus \{ u \}\right) = $$
    $$= 2\cdot \mathrm{Tr}(Q) + \sum_{k=1}^d\sum_{i,j\in c_k\setminus \{u\},i\neq j}Q_{ij} + 2\cdot \sum_{j\in c_m\setminus \{u\}}Q_{ju}$$
    $$= 2\cdot \mathrm{Tr}(Q) + \sum_{k=1}^d\sum_{i,j\in c_k\setminus \{u\},i\neq j}Q_{ij} + 2\cdot \sum_{j\in c_m}\tilde{Q}_{ju}$$
    and thus
    $$\argmax_{m}Q\left(c_1 \setminus \{u\}, \ldots, c_{m-1} \setminus \{u\}, c_{m} \cup \{ u \}, c_{m + 1} \setminus \{ u \}, \ldots, c_d \setminus \{ u \}\right) = $$
    $$= \argmax_{m}\sum_{j\in c_m}\tilde{Q}_{uj} = \argmax_m \sum_{j=1}^n\tilde{Q}_{uj}\cdot X_{jm}$$
    Therefore, updating $u$ according to the serial update rule for $(\tilde{Q},0,\cl)$ and updating it by means of Louvain update rule yields the same result.\\
\end{proof}

\begin{corollary}\label{corollary:louvain_method_as_mhn_with_cl}
    Using the same setting as in Proposition \ref{proposition:louvain_hopfieldnet_equivalence}, DHN $(\tilde{Q}, 0, \cl)$ running in serial mode with $I_n$ as the initial matrix of neuron states runs one whole iteration of of the Louvain method with initialization as in original implementation.
\end{corollary}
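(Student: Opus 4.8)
The plan is to bootstrap the single-step equivalence of Proposition~\ref{proposition:louvain_hopfieldnet_equivalence} into a statement about one full sweep. First I would observe that $I_n \in \Clus_{n\times n}(\R)$ is a clustering matrix and that the clustering it encodes is exactly $\{c_i\}_{i=1}^n$ with $c_i = \{i\}$, which is precisely the initialization of Remark~\ref{remark:louvain_initialisation}. Since the network $(\tilde{Q}, 0, \cl)$ is $n$-dimensional, the label space $\Lab_n$ has dimension $n$, so there are always enough labels available for any clustering the Louvain method can reach during one iteration (every node could, in principle, occupy its own cluster).

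Next I would run an induction over the update order $u = 1, 2, \ldots, n$, which both the Louvain method of Listing~\ref{listing:louvain_method} and the serial DHN traverse, performing one reassignment per node. The inductive hypothesis is that after the first $k$ reassignments, the DHN's matrix of neuron states is a clustering matrix encoding the same clustering that the Louvain method holds after its first $k$ reassignments. The base case $k = 0$ is the remark about $I_n$ above. For the step, I would apply Proposition~\ref{proposition:louvain_hopfieldnet_equivalence} with the current clustering matrix $X^{(k)}$ and node $u = k+1$: it says that a single serial update of neuron $k+1$ yields a clustering matrix encoding exactly the clustering produced by the Louvain reassignment of node $k+1$. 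Composing with the inductive hypothesis closes the step, and $k = n$ gives the corollary.

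The point that warrants a little care is that Proposition~\ref{proposition:louvain_hopfieldnet_equivalence} is stated at the level of \emph{encoded clusterings} rather than clustering matrices, and by Remark~\ref{remark:nonuniqueness_of_clustering_matrix} several matrices encode the same clustering; so the induction should really be carried on the encoded clustering, with the clustering matrix used only as a representative. I would also note the boundary behaviour: once a node leaves a singleton cluster $c_m$ that cluster becomes empty, and then both the Louvain $\argmax$ over $m \in \{1,\dots,n\}$ and the DHN $\argmax_m \sum_{j} \tilde{Q}_{uj} X_{jm}$ assign value $0$ to such an index $m$ (the identity $\sum_{j \in c_m}\tilde{Q}_{uj} = \sum_j \tilde{Q}_{uj} X_{jm}$ is exactly the one appearing inside the proof of Proposition~\ref{proposition:louvain_hopfieldnet_equivalence}), so the two procedures keep agreeing. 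Modulo a fixed shared tie-breaking rule in the $\argmax$, there is no genuine obstacle — the corollary is an iterated application of the proposition, and the only real verification is that the chosen initial states correspond.
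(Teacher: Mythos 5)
Your proposal is correct and follows essentially the same route as the paper's own proof: verify that $I_n$ encodes the singleton-cluster initialization of Remark~\ref{remark:louvain_initialisation}, then iterate Proposition~\ref{proposition:louvain_hopfieldnet_equivalence} step by step through the serial sweep. Your additional remarks on matrix-versus-clustering representatives, empty clusters, and tie-breaking only make explicit details the paper leaves implicit.
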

\begin{proof}
    Since $I_n \in \Clus_{n \times n}(\R)$ is a clustering matrix corresponding to the state when every vertex is in a different cluster, thus initial neuron states matrix of $(\tilde{Q}, 0, \cl)$ encodes the initial state of the Louvain method by Remark \ref{remark:louvain_initialisation}. According to Proposition \ref{proposition:louvain_hopfieldnet_equivalence} this property is preserved after every iteration.
\end{proof}
\noindent
Another popular method frequently applied for finding clusters with high modularity scores is associated to Newman \cite{newman:2006, newman:2006:2}. Let $G$ be a graph as above and let $Q$ be its modularity matrix. Newman's method relies on the fact, that finding a $2$-clustering of $G$ with maximal modularity can be written in terms of maximising quadratic form 
$$s^T Q s$$ 
subject to constraint $s \in \{-1,1\}^n$. Since $Q$ is real and symmetric $n\times n$ matrix, it has a complete set of orthonormal eigenvectors, say $\bd{u}_1, ...,\bd{u}_n$, with corresponding eigenvalues $\lambda_1 \geq \cdots \geq \lambda_n$. Then, for any vector $s \in \R^n$, we can write 
$$s = a_1 \bd{u}_1 + \cdots + a_n \bd{u}_n$$
where $a_i \in \R$. Then we have
$$s^T Q s = \left( \sum_{i=1}^{n} a_i \bd{u}_i \right)^T\cdot Q \cdot \left( \sum_{i=1}^{n} a_i \bd{u}_i \right) = \sum_{i=1}^{n} a_i^2 \lambda_i$$
If $\lVert s\rVert_2 = 1$ and since $\bd{u}_i$ are orthogonal, we derive that $\sum_{i=1}^{n} a_i^2 = 1$. It follows that 
$$\bd{u}_1 = \argmax_{\lVert s \rVert _2 = 1}s^TQs$$
Thus one can approach solving
$$\max_{s\in \{-1,1\}^n}s^TQs$$
by first finding the eigenvector of $Q$ with the largest eigenvalue, and then finding the element of $\{-1, 1\}^n$ which, in geometric terms, aligns with it the most. That is:
$$\argmax_{s \in \{-1, 1\}^n} s^T \bd{u}_1$$
This has the closed form solution given by $\sgn \left( \bd{u}_1 \right)$ where $\sgn$ is applied component-wise (see \cite{newman:2006}). \\
Under optimal circumstances the method is guaranteed to converge quickly. A major drawback of the algorithm, is that while spectral methods provide both a conceptually simple and easy-to-solve framework for finding a graph cut with high modularity, currently clustering into more than two parts is achieved by iteratively splitting clusters, as solving for more than two clusters is computationally hard \cite{newman:2006:2}.
\noindent
In practice, calculating the leading eigenvector is achieved by applying the power-method \cite[pp. 194-198]{allaire:2008} on $Q$. By construction, $Q$ can be written as:
$$Q = W - \frac{k k^T}{\mathrm{Vol}(G)}$$
Where $k = (k_1, ..., k_n)^T$ is the degree vector of $G$ (see Definition \ref{definition:modularity_matrix}). This way, assuming that $W$ is sparse, which is very often the case in real-world applications, the time complexity of multiplying by $Q$ reduces significantly. The listing below shows the pseudo-code implementation of the method.

\begin{lstlisting}[numbers=left,caption={\label{listing:newman_method}\textbf{Newman method}},belowskip=5mm]
    input $Q$
    initialise $\bd{v}$ randomly
    until $\bd{v}/ \lVert \bd{v} \rVert_2$ converges
        $\bd{v} = Q\bd{v}$
    return $\sgn(\bd{v})$
\end{lstlisting}
\begin{remark}
    In implementations, the 4th line of Listing \ref{listing:newman_method} is replaced by 
    $$\bd{v} = Q\bd{v}/||Q\bd{v}||_2$$
    due to practical reasons. From the purely theoretical point of view, which is our main concern, this modification does not influence the output. Therefore, we decide to get rid of it.
\end{remark}
\noindent
Now, consider the 1-dimensional Hopfield network $(Q, \bd{0}, \text{id})$ running in parallel mode. According to Remark \ref{remark:activation_application_to_rows}, the network in parallel mode is equivalent to lines 3-4 in Listing \ref{listing:newman_method}. After convergence\footnote{Convergence is understood as the 'direction' of $X$ being stuck in a cycle. More precisely, we halt the loop when $\lVert \hat{X}(t) - \hat{X}(t-k) \rVert < \varepsilon$, where $t$ is the current timestamp, $k \in \{1, ..., N\}$ for some fixed $N \in \NN$, $\varepsilon > 0$, $\hat{X} = X/\lVert X \rVert$ and $\lVert . \rVert$ is the Frobenius norm . The exact values the parameters $\varepsilon$ and $N$ can be finetuned based on the nature of the application.}, the resulting vector, say $\bd{v}$, can't be used for clustering yet, since it's elements are continuous real values, therefore we apply $\sgn$, to obtain the cut most resembling the output of $(Q, \bd{0}, \text{id})$.
Expressing the Newman method this way allows us to extend the method to the case of multi-label clustering, simply by considering a DHN of higher dimensions. When collapsing the final matrix of neuron states, though, this time our target space is the space of clustering matrices of appropriate dimensions. We provide following proposition about $\cl$, that is a natural generalisation of the claim that $\max_{s\in \{-1, 1\}^n} s^T \bd{u}_1$ has closed form solution $\sgn(\bd{u}_1)$.

\begin{proposition}\label{proposition:cl_projection_and_maximal_align}
    Let $M \in \mathbb{M}_{n \times d}(\R)$. Suppose that $\cl(M)$ denotes the matrix obtained by application of $\cl$ to rows of $M$. Then 
    $$\cl(M) = \argmax_{S \in \Clus_{n\times d}(\R)} \text{Tr} \left( S^T M \right)$$
\end{proposition}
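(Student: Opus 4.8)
The plan is to exploit the fact that $\mathrm{Tr}\left(S^T M\right)$ decomposes as a sum over the rows of $S$, while the constraint $S \in \Clus_{n\times d}(\R)$ is itself a ``product'' constraint—it only requires each row of $S$ to lie in $\mathbb{L}_d$, with no coupling between rows. Hence the maximization separates into $n$ independent one-row problems, and it suffices to solve each of those.

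Concretely, first I would write $S_i$ and $M_i$ for the $i$-th rows of $S$ and $M$ and expand
$$\mathrm{Tr}\left(S^T M\right) = \sum_{i=1}^n \sum_{k=1}^d S_{ik} M_{ik} = \sum_{i=1}^n \langle S_i, M_i \rangle.$$
Next I would observe that $S \in \Clus_{n\times d}(\R)$ if and only if $S_i \in \mathbb{L}_d$ for every $i$, and that the $i$-th summand $\langle S_i, M_i\rangle$ depends only on $S_i$. Therefore
$$\max_{S \in \Clus_{n\times d}(\R)} \mathrm{Tr}\left(S^T M\right) = \sum_{i=1}^n \max_{s \in \mathbb{L}_d} \langle s, M_i\rangle,$$
and a matrix $S$ attains the maximum on the left precisely when each row $S_i$ attains the maximum of $\langle s, M_i\rangle$ over $s \in \mathbb{L}_d$.

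The one-row subproblem is then immediate: any $s \in \mathbb{L}_d$ is of the form $s = \delta_j$ for some $j \in \{1,\dots,d\}$, so $\langle s, M_i\rangle = (M_i)_j$, which is maximized by taking $j = \argmax_j (M_i)_j$; the maximizing $s$ is exactly $\delta_{\argmax_j (M_i)_j}$, which is by Definition \ref{definition:classification_function} the $i$-th row of $\cl(M)$. Assembling the rows, $\cl(M)$ is a maximizer, giving the claimed identity.

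I do not expect a genuine obstacle here; the only point requiring a word of care is the uniqueness of the $\argmax$ when a row of $M$ has several coordinates tied for the maximum. In that case the right-hand side is attained by more than one clustering matrix, and the identity should be read with the convention that $\argmax$ on the right uses the same tie-breaking rule as the $\argmax_j$ in the definition of $\cl$ (e.g.\ smallest index); alternatively one may simply state that $\cl(M)$ belongs to the argmax set. I would add one sentence to this effect so the statement is unambiguous.
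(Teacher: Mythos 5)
Your proposal is correct and follows essentially the same route as the paper's proof: both decompose $\mathrm{Tr}(S^T M)$ row-wise (the paper encodes this via the map $\sigma_S$ selecting the column of the 1 in each row) and maximize each row's contribution independently, which is exactly the definition of $\cl$. Your added remark about ties and reading $\cl(M)$ as a member of the argmax set is a sensible clarification the paper leaves implicit.
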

\begin{proof}
    Clearly $\cl(M) \in \Clus_{n\times d}(\R)$ by Remark \ref{remark:clustering_matricies_is_range_of_classification}. Now, consider any $S \in \Clus_{n\times d}(\R)$. Let $\sigma_{S}: \{1, \ldots, n\} \to \{1, \ldots d\}$ be a map given by formula
    $${S}_{ij} = \left\{ \begin{array}{ll} 1 & \text{if } \sigma_{S}(i) = j \\ 0 & \text{otherwise} \end{array} \right.$$
    and note that
    $$\mathrm{Tr}\left(S^TM\right) = \sum_{i=1}^{m} \sum_{j=1}^{n} S^T_{ij} M_{ji} = \sum_{i=1}^{m} \sum_{j=1}^{n} S_{ji} M_{ji} = \sum_{j=1}^{n}  M_{j\sigma_{S}(j)}$$
    By definition of $\cl$ we have
    $$\sum_{j=1}^{n}  M_{j\sigma_{S}(j)} \leq \sum_{j=1}^{n}  M_{j\sigma_{\cl(M)}(j)}$$
    and hence
    $$\mathrm{Tr}\left(S^TM\right) \leq \mathrm{Tr}\left(\cl(M)^TM\right)$$
    for every $S \in \Clus_{n\times d}(\R)$. This completes the proof.
\end{proof}

\begin{remark}\label{remark:cl_is_projection_in_hilbert_space_of_n_d_matrices}
The proposition above can be also reformulated in the following way. Note that $\mathbb{M}_{n\times d}(\R)$ is a Hilbert space with scalar product
$$\langle M_1, M_2 \rangle = \mathrm{Tr}\left(M_1^TM_2\right)$$
which induces Frobenius norm 
$$\lVert M\rVert_F = \sqrt{\mathrm{Tr}\left(M^TM\right)}= \sqrt{\sum_{i,j=1}^nM_{ij}^2}$$
for $M \in \mathbb{M}_{n\times d}(\R)$. Then Proposition \ref{proposition:cl_projection_and_maximal_align} is equivalent to the fact that the map $\cl:\mathbb{M}_{n\times d}(\R)\twoheadrightarrow \Clus_{n\times d}(\R)$ satisfies
$$\lVert \cl(M) - M \rVert_F \leq \lVert S - M \rVert_F$$
for all $S \in \Clus_{n\times d}(\R)$.
\end{remark}
\noindent
\noindent
Using this information it is straightforward to write Newman's method as a DHN. When extending to multiple dimensions, though, one has to take care that the extra dimensions are somehow 'coupled', otherwise they all converge to the leading eigenvector. See the main text for a possible solution.
\end{document}